\newcommand{\real}{\mathbb{R}}
\DeclareMathOperator{\mean}{E}
\DeclareMathOperator{\tr}{tr}
\DeclareMathOperator*{\argmin}{argmin}
\DeclareMathOperator{\mlp}{MLP}
\DeclareMathOperator{\relu}{ReLU}
\DeclareMathOperator{\softmax}{softmax}
\theoremstyle{definition} 
\theoremstyle{plain}      
\theoremstyle{plain}      \newtheorem{theorem}{Theorem}
\theoremstyle{plain}      
\theoremstyle{plain}      
\icmltitlerunning{DAG-GNN: DAG Structure Learning with Graph Neural Networks}
\begin{document}

\twocolumn[
\icmltitle{DAG-GNN: DAG Structure Learning with Graph Neural Networks}



\icmlsetsymbol{equal}{*}

\begin{icmlauthorlist}
\icmlauthor{Yue Yu}{equal,lehigh}
\icmlauthor{Jie Chen}{equal,mitibm,ibm}
\icmlauthor{Tian Gao}{ibm}
\icmlauthor{Mo Yu}{ibm}
\end{icmlauthorlist}

\icmlaffiliation{lehigh}{Lehigh University}
\icmlaffiliation{mitibm}{MIT-IBM Watson AI Lab}
\icmlaffiliation{ibm}{IBM Research}

\icmlcorrespondingauthor{Yue Yu}{yuy214@lehigh.edu}
\icmlcorrespondingauthor{Jie Chen}{chenjie@us.ibm.com}

\icmlkeywords{DAG, structure learning, graph neural network, variational autoencoder}

\vskip 0.3in
]



\printAffiliationsAndNotice{\icmlEqualContribution} 

\begin{abstract}
  Learning a faithful directed acyclic graph (DAG) from samples of a joint distribution is a challenging combinatorial problem, owing to the intractable search space superexponential in the number of graph nodes. A recent breakthrough formulates the problem as a continuous optimization with a structural constraint that ensures acyclicity (Zheng et al., 2018). The authors apply the approach to the linear structural equation model (SEM) and the least-squares loss function that are statistically well justified but nevertheless limited. Motivated by the widespread success of deep learning that is capable of capturing complex nonlinear mappings, in this work we propose a deep generative model and apply a variant of the structural constraint to learn the DAG. At the heart of the generative model is a variational autoencoder parameterized by a novel graph neural network architecture, which we coin DAG-GNN. In addition to the richer capacity, an advantage of the proposed model is that it naturally handles discrete variables as well as vector-valued ones. We demonstrate that on synthetic data sets, the proposed method learns more accurate graphs for nonlinearly generated samples; and on benchmark data sets with discrete variables, the learned graphs are reasonably close to the global optima. The code is available at \url{https://github.com/fishmoon1234/DAG-GNN}.
\end{abstract}

\section{Introduction}
Bayesian Networks (BN) have been widely used in machine learning applications~\citep{glymour1999computation,ott2004finding}. The structure of a BN takes the form of a directed acyclic graph (DAG) and plays a vital part in causal inference~\citep{pearl88} with many applications in medicine, genetics, economics, and epidemics. Its structure learning problem is however NP-hard~\citep{chickering2004large} and stimulates a proliferation of literature.

Score-based methods generally formulate the structure learning problem as optimizing a certain score function with respect to the unknown (weighted) adjacency matrix $A$ and the observed data samples, with a combinatorial constraint stating that the graph must be acyclic. The intractable search space (with a complexity superexponential in the number of graph nodes) poses substantial challenges for optimization. Hence, for practical problems in a scale beyond small, approximate search often needs to be employed with additional structure assumption~\citep{nie2014advances,chow1968approximating,scanagatta2015learning,chen2016learning}.

Recently, \citet{Zheng2018} formulate an equivalent acyclicity constraint by using a continuous function of the adjacency matrix (specifically, the matrix exponential of $A\circ A$). This approach drastically changes the combinatorial nature of the problem to a continuous optimization, which may be efficiently solved by using maturely developed blackbox solvers. The optimization problem is nevertheless nonlinear, thus these solvers generally return only a stationary-point solution rather than the global optimum. Nevertheless, the authors show that empirically such local solutions are highly comparable to the global ones obtained through expensive combinatorial search.

With the inspiring reformulation of the constraint, we revisit the objective function. The score-based objective functions generally make assumptions of the variables and the model class. For example, \citet{Zheng2018} demonstrate on the linear structural equation model (SEM) with a least-squares loss. While convenient, such assumptions are often restricted and they may not correctly reflect the actual distribution of real-life data.

Hence, motivated by the remarkable success of deep neural networks, which are arguably universal approximators, in this work we develop a graph-based deep generative model aiming at better capturing the sampling distribution faithful to the DAG. To this end, we employ the machinery of variational inference and parameterize a pair of encoder/decoder with specially designed graph neural networks (GNN). The objective function (the score), then, is the evidence lower bound. Different from the current flourishing designs of GNNs~\citep{Bruna2014,Defferrard2016,Li2016,Kipf2017,Hamilton2017,Gilmer2017,Chen2018,Velickovic2018}, the proposed ones are generalized from linear SEM, so that the new model performs at least as well as linear SEM when the data is linear.

Our proposal has the following distinct features and advantages. First, the work is built on the widespread use of deep generative models (specifically, variational autoencoders, VAE~\citep{Kingma2014}) that are able to capture complex distributions of data and to sample from them. Under the graph setting, the weighted adjacency matrix is an explicit parameter, rather than a latent structure, learnable together with other neural network parameters. The proposed network architecture has not been used before.

Second, the framework of VAE naturally handles various data types, notably not only continuous but also discrete ones. All one needs to do is to model the likelihood distribution (decoder output) consistent with the nature of the variables.

Third, owing to the use of graph neural networks for parameterization, each variable (node) can be not only scalar-valued but also vector-valued. These variables are considered node features input to/output of the GNNs.

Fourth, we propose a variant of the acyclicity constraint more suitable for implementation under current deep learning platforms. The matrix exponential suggested by~\citet{Zheng2018}, while mathematically elegant, may not be implemented or supported with automatic differentiation in all popular platforms. We propose a polynomial alternative more practically convenient and as numerically stable as the exponential.

We demonstrate the effectiveness of the proposed method on synthetic data generated from linear and nonlinear SEMs, benchmark data sets with discrete variables, and data sets from applications. For synthetic data, the proposed DAG-GNN outperforms DAG-NOTEARS, the algorithm proposed by~\citet{Zheng2018} based on linear SEM. For benchmark data, our learned graphs compare favorably with those obtained through optimizing the Bayesian information criterion by using combinatorial search.

\section{Background and Related Work}
A DAG $G$ and a joint distribution $\mathcal{P}$ are \emph{faithful} to each other if all and only the conditional independencies true in $\mathcal{P}$ are entailed by $G$~\citep{pearl88}. The faithfulness condition enables one to recover  $G$ from $\mathcal{P}$. Given independent and iid samples $D$ from an unknown distribution corresponding to a faithful but unknown DAG, \emph{structure learning} refers to recovering the DAG from $D$.

Many exact and approximate algorithms for learning DAG from data have been developed, including score-based and constraint-based approaches~\citep{PCalgorithm,chickering02,koivisto2004exact,silander06,jaakkola2010,cussens11,Yuan13learning}. Score-based methods generally use a score to measure the goodness of fit of different graphs over data; and then use a search procedure---such as hill-climbing~\citep{heckerman1995learning,MMPCcor,HCimap}, forward-backward search~\citep{chickering02}, dynamic programming~\citep{smDP,silander06}, A$^*$~\citep{Yuan13learning}, or integer programming~\citep{jaakkola2010,cussens11,cussens2016polyhedral}---in order to find the best graph. Commonly used Bayesian score criteria, such as BDeu and Bayesian information criterion (BIC), are decomposable, consistent, locally consistent~\citep{chickering02}, and score equivalent~\citep{heckerman1995learning}.

To make the DAG search space tractable, approximate methods make additional assumptions such as bounded tree-width~\citep{nie2014advances}, tree-like structures~\citep{chow1968approximating}, approximation~\citep{scanagatta2015learning}, and other constraints about the DAG~\citep{chen2016learning}. Many bootstrap~\citep{friedman1999data} and sampling-based structure learning algorithms~\citep{madigan1995bayesian,friedman2003being,eaton2012bayesian,grzegorczyk2008improving,niinimaki2013annealed,niinimaki2012partial,he2016structure} are also proposed to tackle the expensive search problem. 

Constraint-based methods, in contrast, use (conditional) independence tests to test the existence of  edges between each pair of variables.  Popular algorithms include SGS~\citep{spirtes2000causation},  PC~\citep{spirtes2000causation}, IC~\citep{pearl2003causality}, and FCI~\citep{spirtes1995causal,zhang2008completeness}. Recently, there appears a suite of hybrid algorithms that combine score-based and constraint-based methods, such as MMHC~\citep{Tsamardinos03}, and apply constraint-based methods to multiple environments~\citep{mooij2016joint}.

Due to the NP-hardness, traditional DAG learning methods usually deal with discrete variables, as discussed above, or jointly Gaussian variables~\citep{mohan2012structured,mohammadi2015bayesian}.
Recently, a new continuous optimization approach is proposed~\citep{Zheng2018}, which transforms the discrete search procedure into an equality constraint. This approach enables a suite of continuous optimization techniques such as gradient descent to be used. The approach achieves good structure recovery results, although it is applied to only linear SEM for ease of exposition.

Neural-network approaches started to surface only very recently. \citet{Kalainathan2018} propose a GAN-style (generative adversarial network) method, whereby a separate generative model is applied to each variable and a discriminator is used to distinguish between the joint distributions of real and generated samples. The approach appears to scale well but acyclicity is not enforced.

\section{Neural DAG Structure Learning}
Our method learns the weighted adjacency matrix of a DAG by using a deep generative model that generalizes linear SEM, with which we start the journey.

\subsection{Linear Structural Equation Model}
Let $A\in\real^{m\times m}$ be the weighted adjacency matrix of the DAG with $m$ nodes and $X\in\real^{m\times d}$ be a sample of a joint distribution of $m$ variables, where each row corresponds to one variable. In the literature, a variable is typically a scalar, but it can be trivially generalized to a $d$-dimensional vector under the current setting. The linear SEM model reads
\begin{equation}\label{eqn:linear.sem}
  X = A^TX + Z,
\end{equation}
where $Z\in\real^{m\times d}$ is the noise matrix. When the graph nodes are sorted in the topological order, the matrix $A$ is strictly upper triangular. Hence, ancestral sampling from the DAG is equivalent to generating a random noise $Z$ followed by a triangular solve
\begin{equation}\label{eqn:linear.sem.sampling}
  X=(I-A^T)^{-1}Z.
\end{equation}

\subsection{Proposed Graph Neural Network Model}
Equation~\eqref{eqn:linear.sem.sampling} may be written as $X=f_A(Z)$, a general form recognized by the deep learning community as an abstraction of parameterized graph neural networks that take node features $Z$ as input and return $X$ as high level representations. Nearly all graph neural networks~\citep{Bruna2014,Defferrard2016,Li2016,Kipf2017,Hamilton2017,Gilmer2017,Chen2018,Velickovic2018} can be written in this form. For example, the popular GCN~\citep{Kipf2017} architecture reads
\[
X = \widehat{A}\cdot\relu(\widehat{A}ZW^1)\cdot W^2,
\]
where $\widehat{A}$ is a normalization of $A$ and $W^1$ and $W^2$ are parameter matrices.

Owing to the special structure~\eqref{eqn:linear.sem.sampling}, we propose a new graph neural network architecture
\begin{equation}\label{eqn:decoder}
X=f_2((I-A^T)^{-1}f_1(Z)).
\end{equation}
The parameterized functions $f_1$ and $f_2$ effectively perform (possibly nonlinear) transforms on $Z$ and $X$, respectively. If $f_2$ is invertible, then~\eqref{eqn:decoder} is equivalent to $f_2^{-1}(X) = A^Tf_2^{-1}(X) + f_1(Z)$, a generalized version of the linear SEM~\eqref{eqn:linear.sem}.

We will defer the instantiation of these functions in a later subsection. One of the reasons is that the activation in $f_2$ must match the type of the variable $X$, a subject to be discussed together with discrete variables.

\subsection{Model Learning with Variational Autoencoder}
Given a specification of the distribution of $Z$ and samples $X^1,\ldots,X^n$, one may learn the generative model~\eqref{eqn:decoder} through maximizing the log-evidence
\[
\frac{1}{n}\sum_{k=1}^n\log p(X^k)=\frac{1}{n}\sum_{k=1}^n\log\int p(X^k|Z)p(Z)\,dZ,
\]
which, unfortunately, is generally intractable. Hence, we appeal to variational Bayes.

To this end, we use a variational posterior $q(Z|X)$ to approximate the actual posterior $p(Z|X)$. The net result is the evidence lower bound (ELBO)
\[
L_{\text{ELBO}}=\frac{1}{n}\sum_{k=1}^nL_{\text{ELBO}}^k,
\]
with
\begin{multline}\label{eqn:elbo.i}
L_{\text{ELBO}}^k \equiv -D_{\text{KL}}\Big(q(Z|X^k) \,||\, p(Z)\Big) \\
+\mean_{q(Z|X^k)}\Big[\log p(X^k|Z)\Big].
\end{multline}
Each individual term $L_{\text{ELBO}}^k$ departs from the log-evidence by $D_{\text{KL}}\Big(q(Z|X^k) \,||\, p(Z|X^k)\Big)\ge0$, the KL-divergence between the variational posterior and the actual one.

The ELBO lends itself to a variational autoencoder (VAE)~\citep{Kingma2014}, where given a sample $X^k$, the encoder (inference model) encodes it into a latent variable $Z$ with density $q(Z|X^k)$; and the decoder (generative model) tries to reconstruct $X^k$ from $Z$ with density $p(X^k|Z)$. Both densities may be parameterized by using neural networks.

Modulo the probability specification to be completed later, the generative model~\eqref{eqn:decoder} discussed in the preceding subsection plays the role of the decoder. Then, we propose the corresponding encoder
\begin{equation}\label{eqn:encoder}
  Z = f_4((I-A^T)f_3(X)),
\end{equation}
where$f_3$ and $f_4$ are parameterized functions that conceptually play the inverse role of $f_2$ and $f_1$, respectively.

\begin{figure*}[ht]
  \centering
  \includegraphics[width=.98\linewidth]{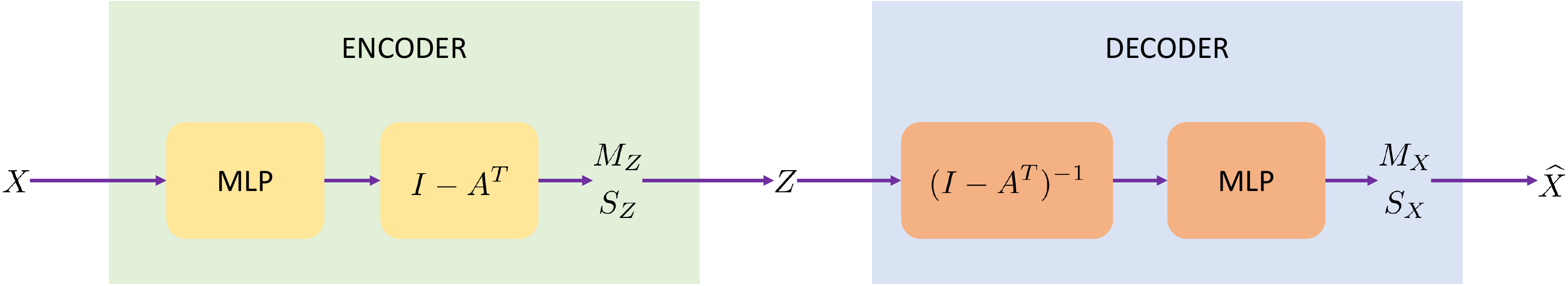}
  \caption{Architecture (for continuous variables). In the case of discrete variables, the decoder output is changed from $M_X,S_X$ to $P_X$.}
  \label{fig:architecture}
\end{figure*}

\subsection{Architecture and Loss Function}
To complete the VAE, one must specify the distributions in~\eqref{eqn:elbo.i}. Recall that for now both $X^k$ and $Z$ are $m\times d$ matrices. For simplicity, the prior is typically modeled as the standard matrix normal $p(Z) = \mathcal{MN}_{m\times d}(0,I,I)$.

For the inference model, we let $f_3$ be a multilayer perceptron (MLP) and $f_4$ be the identity mapping. Then, the variational posterior $q(Z|X)$ is a factored Gaussian with mean $M_Z\in\real^{m\times d}$ and standard deviation $S_Z\in\real^{m\times d}$, computed from the encoder
\begin{equation}\label{eqn:encoder.p}
[M_Z | \log S_Z] = (I-A^T)\mlp(X, W^1, W^2),
\end{equation}
where
$\mlp(X, W^1, W^2) := \relu(XW^1)W^2$,
and $W^1$ and $W^2$ are parameter matrices.

For the generative model, we let $f_1$ be the identity mapping and $f_2$ be an MLP. Then, the likelihood $p(X|Z)$ is a factored Gaussian with mean $M_X\in\real^{m\times d}$ and standard deviation $S_X\in\real^{m\times d}$, computed from the decoder
\begin{equation}\label{eqn:decoder.p}
[M_X | \log S_X] = \mlp((I-A^T)^{-1}Z, W^3, W^4),
\end{equation}
where $W^3$ and $W^4$ are parameter matrices.

One may switch the MLP and the identity mapping inside each of the encoder/decoder, but we find that the performance is less competitive. One possible reason is that the current design~\eqref{eqn:decoder.p} places an emphasis on the nonlinear transform of a sample $(I-A^T)^{-1}Z$ from linear SEM, which better captures nonlinearity.

Based on~\eqref{eqn:encoder.p} and~\eqref{eqn:decoder.p}, the KL-divergence term in the ELBO~\eqref{eqn:elbo.i} admits a closed form
\begin{multline}\label{eqn:loss.kl}
D_{\text{KL}}\Big(q(Z|X) \,||\, p(Z)\Big) = \\
\frac{1}{2}\sum_{i=1}^m\sum_{j=1}^d (S_Z)_{ij}^2 + (M_Z)_{ij}^2 - 2\log(S_Z)_{ij} - 1,
\end{multline}
and the reconstruction accuracy term may be computed with Monte Carlo approximation
\begin{multline}\label{eqn:loss.recons}
\mean_{q(Z|X)}\Big[\log p(X|Z)\Big] \approx \\
\frac{1}{L}\sum_{l=1}^L\sum_{i=1}^m\sum_{j=1}^d -\frac{(X_{ij}-(M_X^{(l)})_{ij})^2}{2(S_X^{(l)})_{ij}^2} - \log (S_X^{(l)})_{ij} - c,
\end{multline}
where $c$ is a constant and $M_X^{(l)}$ and $S_X^{(l)}$ are the outputs of the decoder~\eqref{eqn:decoder.p} by taking as input Monte Carlo samples $Z^{(l)}\sim q(Z|X)$, for $l=1,\ldots,L$.

Note that under the autoencoder framework, $Z$ is considered latent (rather than the noise in linear SEM). Hence, the column dimension of $Z$ may be different from $d$. From the neural network point of view, changing the column dimension of $Z$ affects only the sizes of the parameter matrices $W^2$ and $W^3$. Sometimes, one may want to use a smaller number than $d$ if he/she observes that the data has a smaller intrinsic dimension.

An illustration of the architecture is shown in Figure~\ref{fig:architecture}.

\subsection{Discrete Variables}
One advantage of the proposed method is that it naturally handles discrete variables. We assume that each variable has a finite support of cardinality $d$. 

Hence, we let each row of $X$ be a one-hot vector, where the ``on'' location indicates the value of the corresponding variable. We still use standard matrix normal to model the prior and factored Gaussian to model the variational posterior, with~\eqref{eqn:encoder.p} being the encoder. On the other hand, we need to slightly modify the likelihood to cope with the discrete nature of the variables.

Specifically, we let $p(X|Z)$ be a factored categorical distribution with probability matrix $P_X$, where each row is a probability vector for the corresponding categorical variable. To achieve so, we change $f_2$ from the identity mapping to a row-wise softmax and modify the decoder~\eqref{eqn:decoder.p} to
\begin{equation}\label{eqn:decoder.p.discrete}
P_X = \softmax(\mlp((I-A^T)^{-1}Z, W^3, W^4)).
\end{equation}
Correspondingly for the ELBO, the KL term~\eqref{eqn:loss.kl} remains the same, but the reconstruction term~\eqref{eqn:loss.recons} needs be modified to
\begin{equation}\label{eqn:loss.recons.discrete}
\mean_{q(Z|X)}\Big[\log p(X|Z)\Big] \approx \\
\frac{1}{L}\sum_{l=1}^L\sum_{i=1}^m\sum_{j=1}^d X_{ij}\log (P_X^{(l)})_{ij},
\end{equation}
where $P_X^{(l)}$ is the output of the decoder~\eqref{eqn:decoder.p.discrete} by taking as input Monte Carlo samples $Z^{(l)}\sim q(Z|X)$, for $l=1,\ldots,L$.

\subsection{Connection to Linear SEM}
One has seen from the forgoing discussions how the proposed model is developed from linear SEM: We apply nonlinearality to the sampling procedure~\eqref{eqn:linear.sem.sampling} of SEM, treat the resulting generative model as a decoder, and pair with it a variational encoder for tractable learning. Compared with a plain autoencoder, the variational version allows a modeling of the latent space, from which samples are generated.

We now proceed, in a reverse thought flow, to establish the connection between the loss function of the linear SEM considered in~\citet{Zheng2018} and that of ours. We first strip off the variational component of the autoencoder. This plain version uses~\eqref{eqn:encoder} as the encoder and~\eqref{eqn:decoder} as the decoder. For notational clarity, let us write $\widehat{X}$ as the output of the decoder, to distinguish it from the encoder input $X$. A typical sample loss to minimize is
\[
\frac{1}{2}\sum_{i=1}^m\sum_{j=1}^d(X_{ij}-\widehat{X}_{ij})^2
+ \frac{1}{2}\sum_{i=1}^m\sum_{j=1}^d Z_{ij}^2,
\]
where the first term is the reconstruction error and the second term is a regularization of the latent space. One recognizes that the reconstruction error is the same as the negative reconstruction accuracy~\eqref{eqn:loss.recons} in the ELBO, up to a constant, if the standard deviation $S_X$ is $1$, the mean $M_X$ is taken as $\widehat{X}$, and only one Monte Carlo sample is drawn from the variational posterior. Moreover, the regularization term is the same as the KL-divergence~\eqref{eqn:loss.kl} in the ELBO if the standard deviation $S_Z$ is $1$ and the mean $M_Z$ is taken as $Z$.

If we further strip off the (possibly nonlinear) mappings $f_1$ to $f_4$, then the encoder~\eqref{eqn:encoder} and decoder~\eqref{eqn:decoder} read, respectively, $Z=(I-A^T)X$ and $\widehat{X}=(I-A^T)^{-1}Z$. This pair results in perfect reconstruction, and hence the sample loss reduces to
\begin{equation}\label{eqn:loss.ls}
\frac{1}{2}\sum_{i=1}^m\sum_{j=1}^d Z_{ij}^2
= \frac{1}{2}\|(I-A^T)X\|_F^2,
\end{equation}
which is the least-squares loss used and justified by~\citet{Zheng2018}.

\subsection{Acyclicity Constraint}
Neither maximizing the ELBO~\eqref{eqn:elbo.i} nor minimizing the least-squares loss~\eqref{eqn:loss.ls} guarantees that the corresponding graph of the resulting $A$ is acyclic. \citet{Zheng2018} pair the loss function with an equality constraint, whose satisfaction ensures acyclicity.

The idea is based on the fact that the positivity of the $(i,j)$ element of the $k$-th power of a nonnegative adjacency matrix $B$ indicates the existence of a length-$k$ path between nodes $i$ and $j$. Hence, the positivity of the diagonal of $B^k$ reveals cycles. The authors leverage the trick that the matrix exponential admits a Taylor series (because it is analytic on the complex plane), which is nothing but a weighted sum of all nonnegative integer powers of the matrix. The coefficient of the zeorth power (the identity matrix $I_{m\times m}$) is $1$, and hence the trace of the exponential of $B$ must be exactly $m$ for a DAG. To satisfy nonnegativity, one may let $B$ be the elementwise square of $A$; that is, $B=A\circ A$.

Whereas the formulation of this acyclicity constraint is mathematically elegant, support of the matrix exponential may not be available in all deep learning platforms. To ease the coding effort, we propose an alternative constraint that is practically convenient.

\begin{theorem}\label{thm:tr}
  Let $A\in\real^{m\times m}$ be the (possibly negatively) weighted adjacency matrix of a directed graph. For any $\alpha>0$, the graph is acyclic if and only if
  \begin{equation}\label{eqn:dag.constraint}
    \tr[(I + \alpha A\circ A)^m] - m = 0.
  \end{equation}
\end{theorem}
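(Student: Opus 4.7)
The plan is to reduce the claim to an elementary counting fact about powers of the nonnegative matrix $B := A\circ A$, and then combine it with the pigeonhole observation that a graph on $m$ vertices with any cycle must contain a simple cycle of length at most $m$.

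First I would expand the matrix power with the binomial theorem. Since $I$ commutes with $B$, we get
\[
(I+\alpha B)^m = \sum_{k=0}^{m} \binom{m}{k}\alpha^k B^k,
\]
so taking the trace and subtracting $m = \tr(I)$ yields
\[
\tr[(I+\alpha A\circ A)^m] - m \;=\; \sum_{k=1}^{m}\binom{m}{k}\alpha^k \tr(B^k).
\]
Because $B$ has nonnegative entries, every $B^k$ does too, so each $\tr(B^k)\ge 0$; combined with $\alpha>0$ and $\binom{m}{k}>0$, the entire sum is a sum of nonnegative terms. Hence the left-hand side vanishes if and only if $\tr(B^k)=0$ for every $k=1,\dots,m$.

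Next I would interpret $\tr(B^k)$ combinatorially. The standard identity $(B^k)_{ii} = \sum (\text{product of edge weights along closed walks } i\!\to\! i \text{ of length }k)$ shows, since all summands are nonnegative, that $\tr(B^k)=0$ if and only if there is no closed walk of length $k$ in the directed graph whose edge set is $\{(i,j): A_{ij}\neq 0\}$ (equivalently $B_{ij}>0$). So the theorem reduces to the claim: the graph is acyclic $\iff$ there is no closed walk of any length $k\in\{1,\dots,m\}$.

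The forward direction is immediate, since acyclicity forbids closed walks of every length. The reverse direction is where the bound $k\le m$ matters, and I expect this to be the only non-routine step: if the graph contains any directed cycle, then it contains a simple directed cycle, and any simple cycle on a vertex set of size $m$ uses at most $m$ distinct vertices and thus has length at most $m$. Traversing this simple cycle gives a closed walk of some length $k\in\{1,\dots,m\}$, so $\tr(B^k)>0$ for that $k$, contradicting the vanishing of the sum. This closes the equivalence and proves the theorem; the parameter $\alpha>0$ plays no role beyond ensuring positivity of the coefficients, which is why the constraint works for any positive choice.
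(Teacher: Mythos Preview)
Your proposal is correct and follows essentially the same approach as the paper: define $B=A\circ A$, expand $(I+\alpha B)^m$ binomially, use nonnegativity of $B$ so that $\tr(B^k)\ge 0$ detects closed walks, and invoke the fact that any cycle implies a cycle of length at most $m$. Your write-up is in fact more explicit than the paper's (you spell out why the sum vanishes iff each $\tr(B^k)$ does, and why a simple cycle has length $\le m$), but the argument is the same.
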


We use~\eqref{eqn:dag.constraint} as the equality constraint when maximizing the ELBO. The computations of both $(I+\alpha B)^m$ and $\exp(B)$ may meet numerical difficulty when the eigenvalues of $B$ have a large magnitude. However, the former is less severe than the latter with a judicious choice of $\alpha$.

\begin{theorem}\label{thm:exp}
  Let $\alpha=c/m>0$ for some $c$ . Then for any complex $\lambda$, we have
  $(1+\alpha|\lambda|)^m \le e^{c|\lambda|}$.
\end{theorem}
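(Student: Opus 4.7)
The plan is to reduce the claim to the single-variable inequality $1+x\le e^x$, valid for all real $x$ (and with equality only at $x=0$). This inequality is completely standard and follows, for instance, from the convexity of $e^x$ together with the fact that $1+x$ is its tangent line at the origin, or from the Taylor series $e^x = 1 + x + x^2/2! + \cdots$ which has nonnegative higher-order terms for $x\ge 0$ and is bounded below by $1+x$ for $x<0$ by a short calculus check.

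Specifically, I would set $x := \alpha|\lambda|$. Since $\alpha>0$ and $|\lambda|\ge 0$, we have $x\ge 0$, so the base inequality gives
\[
1 + \alpha|\lambda| \;\le\; e^{\alpha|\lambda|}.
\]
Both sides are strictly positive, so raising to the $m$-th power preserves the inequality:
\[
(1+\alpha|\lambda|)^m \;\le\; e^{m\alpha|\lambda|}.
\]
Finally I would substitute $\alpha = c/m$ to simplify the exponent to $m\alpha|\lambda| = c|\lambda|$, yielding the claimed bound $(1+\alpha|\lambda|)^m \le e^{c|\lambda|}$.

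There is no genuine obstacle here: the statement is a packaging of the elementary fact $1+x\le e^x$ together with a monotone rescaling. The only thing worth being careful about is to note that the hypothesis $\alpha>0$ and the nonnegativity of $|\lambda|$ together guarantee that $\alpha|\lambda|$ lies in the regime where the inequality is being applied, and that both sides of the inequality before exponentiation are nonnegative so the $m$-th power is order-preserving. The role of this lemma in the broader paper is then to quantify why the polynomial surrogate $(I+\alpha A\circ A)^m$ of Theorem~\ref{thm:tr} is, with the choice $\alpha=c/m$, numerically no worse behaved than the matrix exponential $\exp(A\circ A)$ used by Zheng et al., since eigenvalue magnitudes of the former are controlled by those of the latter.
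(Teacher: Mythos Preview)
Your proof is correct. It differs slightly from the paper's argument: the paper rewrites $(1+\alpha|\lambda|)^m=(1+c|\lambda|/m)^m$ and then invokes the standard fact that $m\mapsto(1+y/m)^m$ is monotonically increasing with limit $e^y$ for $y\ge0$, so the value at any finite $m$ lies below the limit. You instead apply the pointwise inequality $1+x\le e^x$ at $x=\alpha|\lambda|$ and raise both sides to the $m$-th power. Your route is arguably more self-contained, since the monotonicity claim the paper uses itself requires a short justification (and in fact one common proof of that monotonicity goes through exactly the inequality you invoke). The paper's framing, on the other hand, makes transparent that the polynomial surrogate approaches the exponential from below as $m$ grows, which is the intuition behind the choice $\alpha=c/m$. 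Either way the result follows immediately.
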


In practice, $\alpha$ may be treated as a hyperparameter and its setting depends on an estimation of the largest eigenvalue of $B$ in magnitude. This value is the spectral radius of $B$, and because of nonnegativity, it is bounded by the maximum row sum according to the Perron--Frobenius theorem.

\subsection{Training}
Based on the foregoing, the learning problem is
\begin{align*}
\min_{A,\theta} \quad& f(A,\theta) \equiv -L_{\text{ELBO}} \\
\text{s.t.} \quad& h(A) \equiv \tr[(I + \alpha A\circ A)^m] - m = 0,
\end{align*}
where the unknowns include the matrix $A$ and all the parameters $\theta$ of the VAE (currently we have $\theta=\{W^1,W^2,W^3,W^4\}$). Nonlinear equality-constrained problems are well studied and we use the augmented Lagrangian approach to solve it. For completeness, we summarize the algorithm here; the reader is referred to standard textbooks such as Section 4.2 of~\citet{Bertsekas1999} for details and convergence analysis.

Define the augmented Lagrangian
\[
L_c(A,\theta,\lambda)=f(A,\theta)+\lambda h(A)+\frac{c}{2}|h(A)|^2,
\]
where $\lambda$ is the Lagrange multiplier and $c$ is the penalty parameter. When $c=+\infty$, the minimizer of $L_c(A,\theta,\lambda)$ must satisfy $h(A)=0$, in which case $L_c(A,\theta,\lambda)$ is equal to the objective function $f(A,\theta)$. Hence, the strategy is to progressively increase $c$, for each of which minimize the unconstrained augmented Lagrangian. The Lagrange multiplier $\lambda$ is correspondingly updated so that it converges to the one under the optimality condition.

There exist a few variants for updating $\lambda$ and increasing $c$, but a typical effective rule reads:
\begin{align}
  (A^k,\theta^k) &= \argmin_{A,\theta} L_{c^k}(A,\theta,\lambda^k), \label{eqn:sub}\\
  \lambda^{k+1} &= \lambda^k+c^kh(A^k),\\
  c^{k+1} &=
  \begin{cases}
    \eta c^k, & \text{if } |h(A^k)|>\gamma|h(A^{k-1})|, \\
    c^k, & \text{otherwise},
  \end{cases}
\end{align}
where $\eta>1$ and $\gamma<1$ are tuning parameters. We find that often $\eta=10$ and $\gamma=1/4$ work well.

The subproblem~\eqref{eqn:sub} may be solved by using blackbox stochastic optimization solvers, by noting that the ELBO is defined on a set of samples.

\section{Experiments}
In this section, we present a comprehensive set of experiments to demonstrate the effectiveness of the proposed method DAG-GNN. In Section~\ref{sec:data1}, we compare with DAG-NOTEARS, the method proposed by~\citet{Zheng2018} based on linear SEM, on synthetic data sets generated by sampling generalized linear models, with an emphasis on nonlinear data and vector-valued data ($d>1$). In Section~\ref{sec:data2}, we showcase the capability of our model with discrete data, often seen in benchmark data sets with ground truths for assessing quality. To further illustrate the usefulness of the proposed method, in Section~\ref{sec:data3} we apply DAG-GNN on a protein data set for the discovery of consensus protein signaling network, as well as a knowledge base data set for learning causal relations.

Our implementation is based on PyTorch~\citep{paszke2017automatic}. We use Adam~\citep{Kingma2015} to solve the subproblems~\eqref{eqn:sub}. To avoid overparameterization, we parameterize the variational posterior $q(Z|X)$ as a factored Gaussian with constant unit variance, and similarly for the likelihood $p(X|Z)$. When extracting the DAG, we use a thresholding value $0.3$, following the recommendation of~\citet{Zheng2018}. For benchmark and application data sets, we include a Huber-norm regularization of $A$ in the objective function to encourage more rapid convergence.





\subsection{Synthetic Data Sets}\label{sec:data1}
The synthetic data sets are generated in the following manner. We first generate a random DAG by using the Erd\H{o}s--R\'{e}nyi model with expected node degree 3, then assign uniformly random weights for the edges to obtain the weighted adjacency matrix $A$. A sample $X$ is generated by sampling the (generalized) linear model $X=g(A^TX)+Z$ with some function $g$ elaborated soon. The noise $Z$ follows standard matrix normal. When the dimension $d=1$, we use lowercase letters to denote vectors; that is, $x=g(A^Tx)+z$. We compare DAG-GNN with DAG-NOTEARS and report the structural Hamming distance (SHD) and false discovery rate (FDR), each averaged over five random repetitions. With sample size $n=5000$, we run experiments on four graph sizes $m\in\{10,20,50,100\}$. In Sections~\ref{sec:lineardata} and~\ref{sec:nonlineardata} we consider scalar-valued variables ($d=1)$ and in Section~\ref{sec:multiddata} vector-valued variables ($d>1$).

\subsubsection{Linear Case}\label{sec:lineardata}
This case is the linear SEM model, with $g$ being the identity mapping. The SHD and FDR are plotted in Figure~\ref{fig:linear}. One sees that the graphs learned by the proposed method are substantially more accurate than those by DAG-NOTEARS when the graphs are large.

\begin{figure}[!ht]
\centering
\subfigure{\scalebox{0.38}[0.35]{\includegraphics{./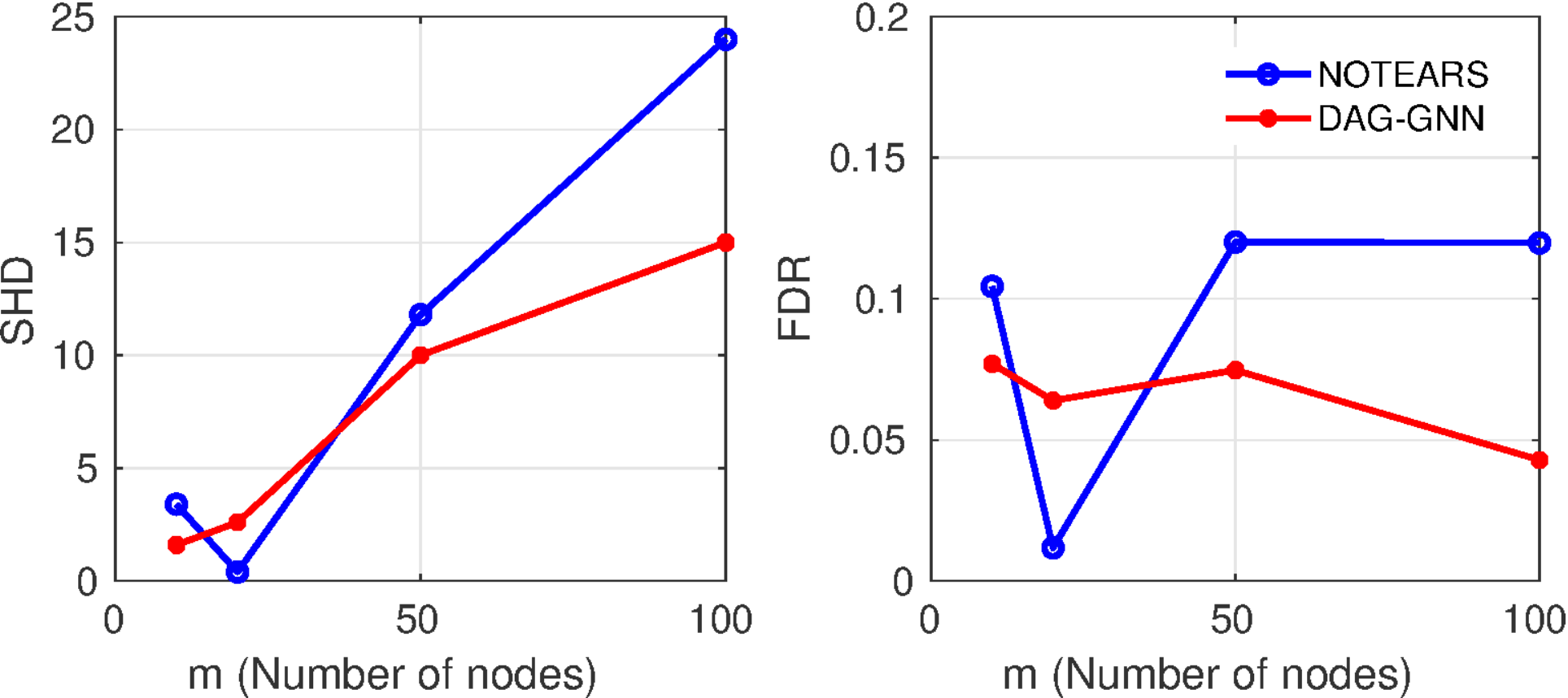}}}
\caption{Structure discovery in terms of SHD and FDR to the true graph, on synthetic data set generated by $x=A^Tx+z$.}
\label{fig:linear}
\end{figure}

\subsubsection{Nonlinear Case}\label{sec:nonlineardata}
We now consider data generated by the following model
\[
x = A^Th(x)+z,
\]
for some nonlinear function $h$. Taking first-order approximation $h(x)\approx h(0)\textbf{1}+h'(0)x$ (ignoring higher-order terms of $x$), one obtains an amendatory approximation of the graph adjacency matrix, $h'(0)A$. This approximate ground truth maintains the DAG structure, with only a scaling on the edge weights.

We take $h(x)=\cos(x+\mathbf{1})$ and plot the SHD and FDR in Figure~\ref{fig:cos}. one observes that DAG-GNN slightly improves over DAG-NOTEARS in terms of SHD. Further, FDR is substantially improved, by approximately a factor of three, which indicates that DAG-GNN tends to be more accurate on selecting correct edges. This observation is consistent with the parameter estimates shown in Figure~\ref{fig:cosG}, where the ground truth is set as $-\sin(1)A$. The heat map confirms that DAG-GNN results in fewer ``false alarms'' and recovers a relatively sparser matrix.

\begin{figure}[!ht]
\centering
\subfigure{\scalebox{0.38}[0.35]{\includegraphics{./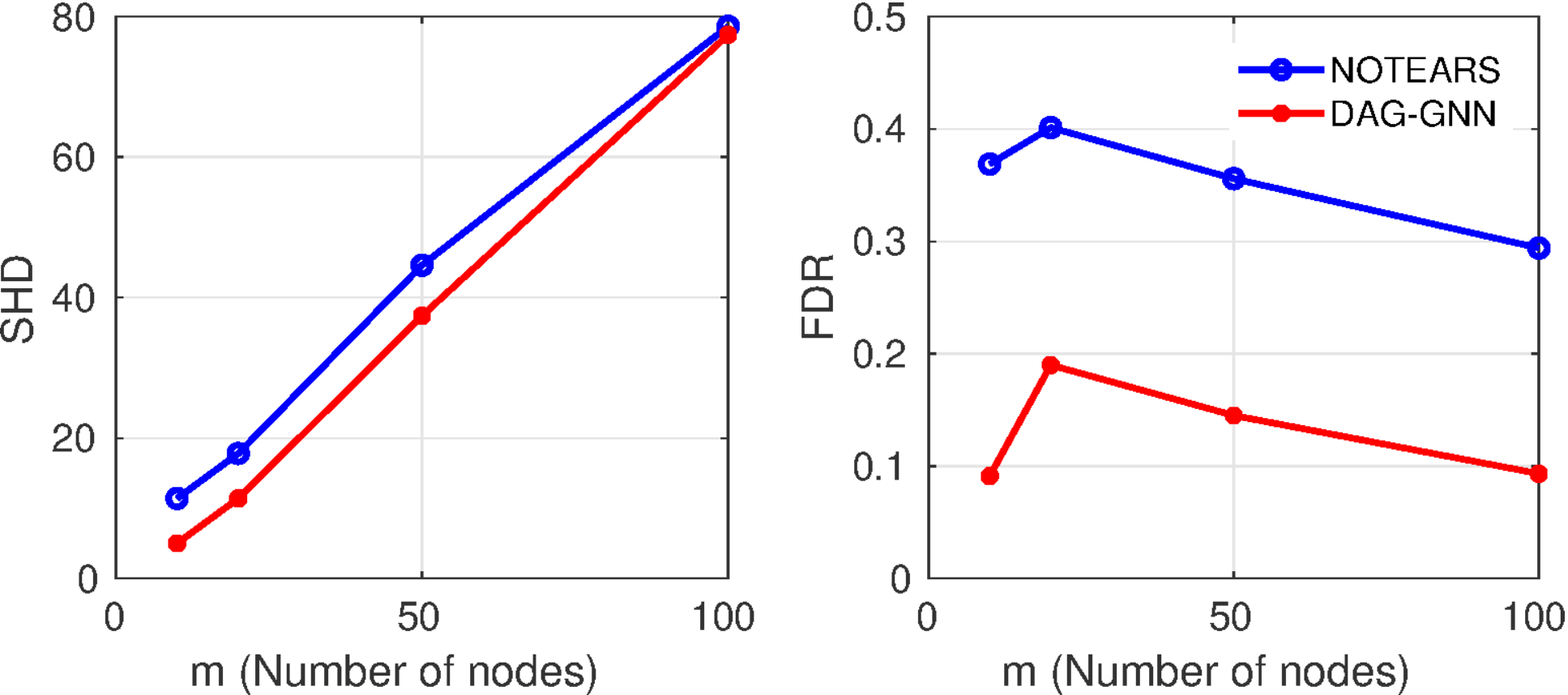}}}
\caption{Structure discovery in terms of SHD and FDR to the true graph, on synthetic data set generated by $x=A^T\cos(x+\mathbf{1})+z$.}
\label{fig:cos}
\end{figure}

\begin{figure}[!ht]
\centering
\subfigure{\scalebox{0.22}[0.22]{\includegraphics{./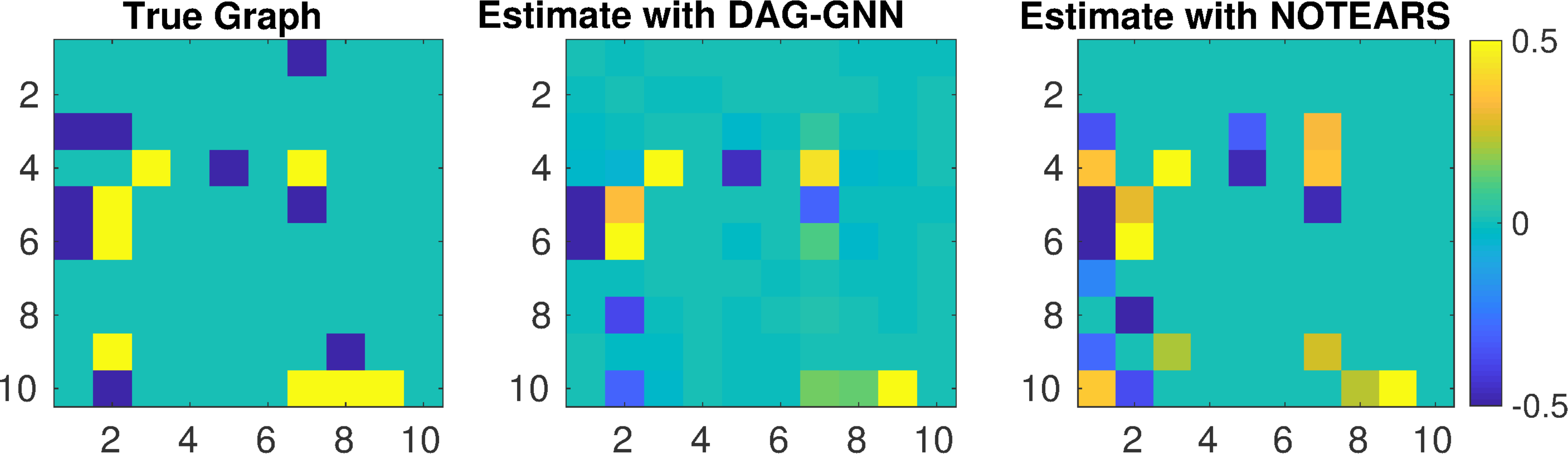}}}
\caption{Parameter estimates (before thresholding) of the graph on synthetic data set generated by $x=A^T\cos(x+\mathbf{1})+z$.}
\label{fig:cosG}
\end{figure}

We further experiment with a more complex nonlinear generation model, where the nonlinearity occurs after the linear combination of the variables, as opposed to the preceding case where nonlinearity is applied to the variables before linear combination. Specifically, we consider
\begin{displaymath}
x=2\sin(A^T(x+0.5\cdot\mathbf{1}))+A^T(x+0.5\cdot\mathbf{1})+z,
\end{displaymath}
and plot the results in Figure~\ref{fig:newsin}. One sees that with higher nonlinearity, the proposed method results in significantly better SHD and FDR than does DAG-NOTEARS.

\begin{figure}[!ht]
\centering
\subfigure{\scalebox{0.38}[0.35]{\includegraphics{./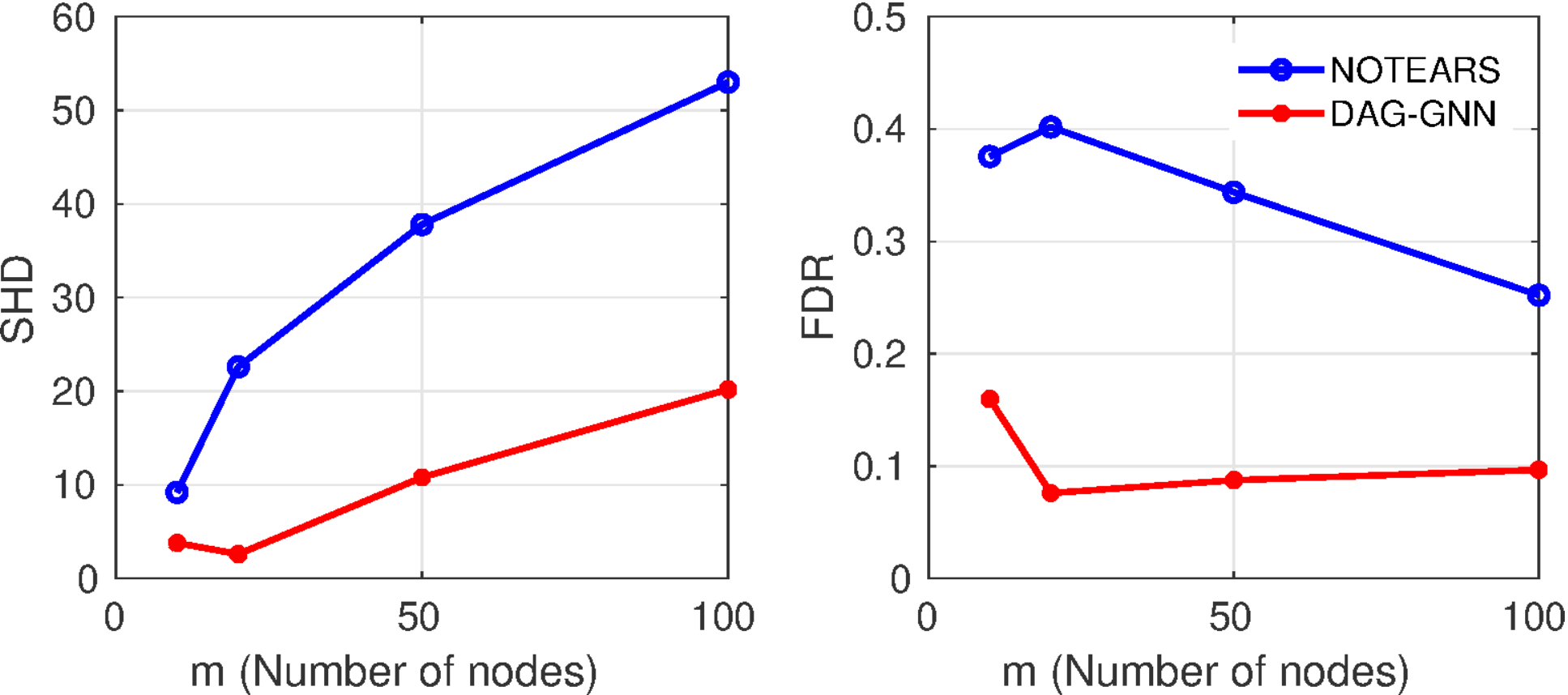}}}
\caption{Structure discovery in terms of SHD and FDR to the true graph, on synthetic data set generated by $x=2\sin(A^T(x+0.5\cdot\mathbf{1}))+A^T(x+0.5\cdot\mathbf{1})+z$.}
\label{fig:newsin}
\end{figure}

\subsubsection{Vector-Valued Case}\label{sec:multiddata}
The proposed method offers a modeling benefit that the variables can be vector-valued with $d>1$. Moreover, since $Z$ resides in the latent space of the autoencoder and is not interpreted as noise as in linear SEM, one may take a smaller column dimension $d_Z<d$ if he/she believes that the variables have a lower intrinsic dimension. To demonstrate this capability, we construct a data set where the different dimensions come from a randomly scaled and perturbed sample from linear SEM. Specifically, given a graph adjacency matrix $A$, we first construct a sample $\tilde{x}\in\real^{m\times1}$ from the linear SEM $\tilde{x}=A^T\tilde{x}+\tilde{z}$, and then generate for the $k$-th dimension $x^k=u^k\tilde{x}+v^k+z^k$, where $u^k$ and $v^k$ are random scalars from standard normal and $z^k$ is a standard normal vector. The eventual sample is $X=[x^1|x^2|\cdots|x^d]$.

We let $d=5$ and $d_Z=1$ and compare DAG-GNN with DAG-NOTEARS. The SHD and FDR are plotted in Figure~\ref{fig:multid}. The figure clearly shows the significantly better performance of the proposed method. Moreover, the parameter estimates are shown in Figure~\ref{fig:multidG}, compared against the ground-truth $A$. One sees that the estimated graph from DAG-GNN successfully captures all the ground truth edges and that the estimated weights are also similar. On the other hand, DAG-NOTEARS barely learns the graph.

\begin{figure}[!ht]
\centering
\subfigure{\scalebox{0.38}[0.35]{\includegraphics{./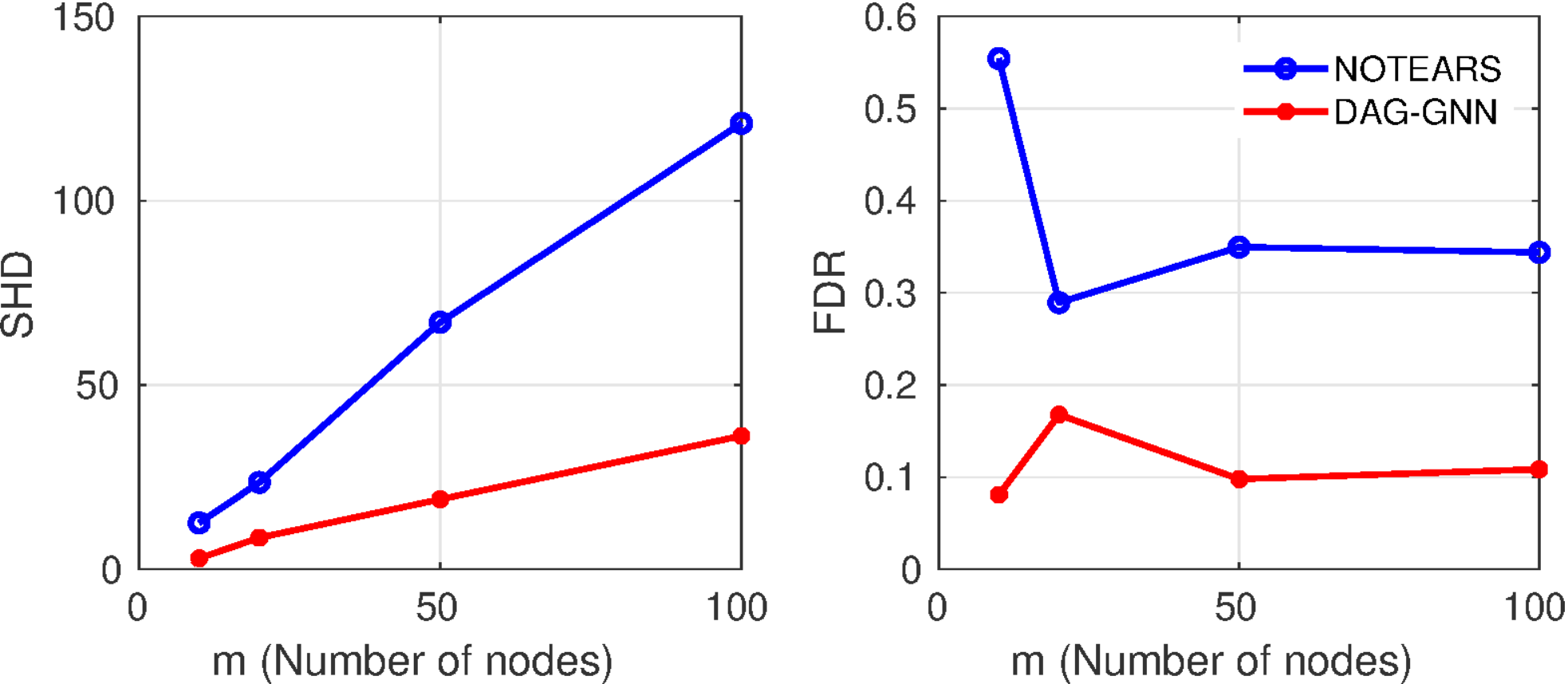}}}
\caption{Structure discovery in terms of SHD and FDR to the true graph, on synthetic vector-valued data set.}
\label{fig:multid}
\end{figure}

\begin{figure}[!ht]
\centering
\subfigure{\scalebox{0.22}[0.22]{\includegraphics{./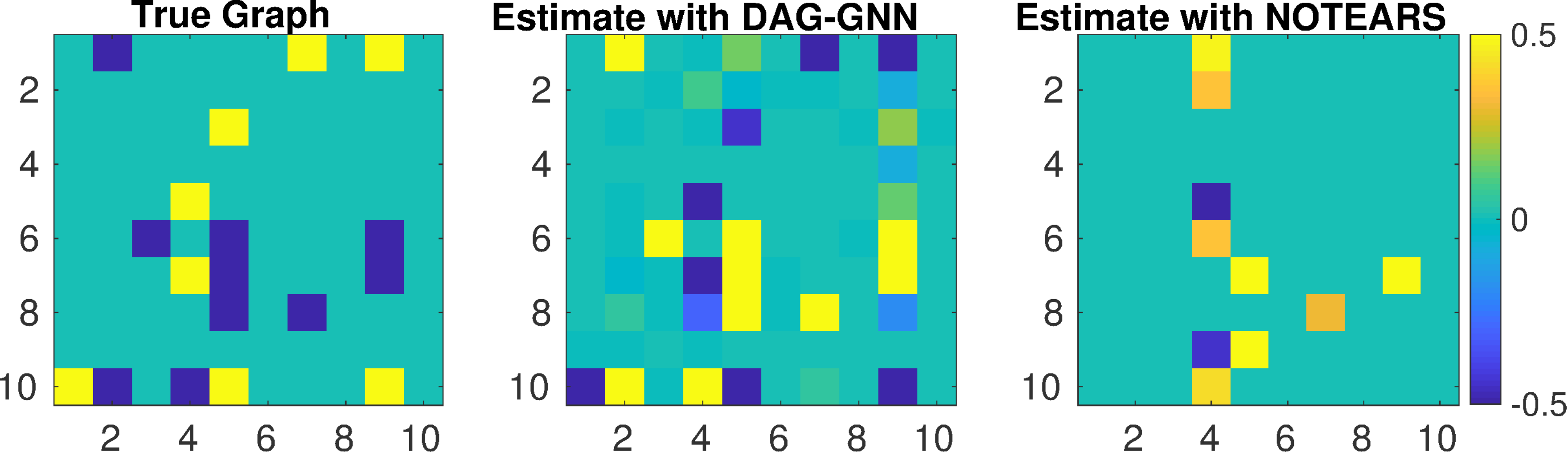}}}
\caption{Parameter estimates (before thresholding) of the graph on synthetic vector-valued data set.}
\label{fig:multidG}
\end{figure}

\vspace{-0.3cm}

\subsection{Benchmark Data Sets}\label{sec:data2}
A benefit of the proposed method is that it naturally handles discrete variables, a case precluded by linear SEM. We demonstrate the use of DAG-GNN on three discrete benchmark data sets: Child, Alarm, and Pigs~\citep{MMPCcor}. For comparison is the state-of-the-art exact DAG solver GOPNILP~\citep{cussens2016polyhedral}, which is based on a constrained integer programming formulation. We use 1000 samples for learning.

One sees from Table~\ref{Table:discrete} that our results are reasonably close to the ground truth, whereas not surprisingly the results of GOPNILP are nearly optimal. The BIC score gap exhibits by DAG-GNN may be caused by the relatively simple autoencoder architecture, which is less successful in approximating multinomial distributions. Nevertheless, it is encouraging that the proposed method as a unified framework can handle discrete variables with only slight changes in the network architecture.

\begin{table}[!ht]
\centering
\caption{BIC scores on benchmark datasets of discrete variables.}
\label{Table:discrete}
\vskip 0.1in
\small
\begin{tabular}{ccccc}
\hline
Dataset& $m$ &Groundtruth&GOPNILP&DAG-GNN\\
\hline
Child & 20 & \texttt{-1.27e+4} & \texttt{-1.27e+4} & \texttt{-1.38e+4}\\
Alarm & 37 & \texttt{-1.07e+4} & \texttt{-1.12e+4} & \texttt{-1.28e+4}\\
Pigs & 441 & \texttt{-3.48e+5} & \texttt{-3.50e+5} & \texttt{-3.69e+5}\\
\hline
\end{tabular}
\end{table}

\subsection{Applications}\label{sec:data3}
We consider a bioinformatics data set~\citep{sachs2005causal} for the discovery of a protein signaling network based on expression levels of proteins and phospholipids. This is a widely used data set for research on graphical models, with experimental annotations accepted by the biological research community. The data set offers continuous measurements of expression levels of multiple phosphorylated proteins and phospholipid components in human immune system cells, and the modeled network provides the ordering of the connections between pathway components. Based on $n=7466$ samples of $m=11$ cell types, \citet{sachs2005causal} estimate 20 edges in the graph.

In Table~\ref{Table:protein}, we compare DAG-GNN with DAG-NOTEARS as well as FSG, the fast greedy search method proposed by~\citet{ramsey2017million}, against the ground truth offered by~\citet{sachs2005causal}. The proposed method achieves the lowest SHD. We further show in Figure~\ref{fig:protein} our estimated graph. One observes that it is acyclic. Our method successfully learns 8 out of 20 ground-truth edges (as marked by red arrows), and predicts 5 indirectly connected edges (blue dashed arrows) as well as 3 reverse edges (yellow arrows).

\begin{table}[!ht]
\centering
\caption{Results on protein signaling network: comparison of the predicted graphs with respect to the ground truth.} \label{Table:protein}
\vskip 0.1in
\begin{tabular}{ccc}
\hline
Method& SHD & \# Predicted edges\\
\hline
FGS&22&17\\
NOTEARS&22&16\\
DAG-GNN&19&18\\
\hline
\end{tabular}
\end{table}

\begin{figure}[!ht]
\centering
\subfigure{\scalebox{0.6}[0.55]{\includegraphics{./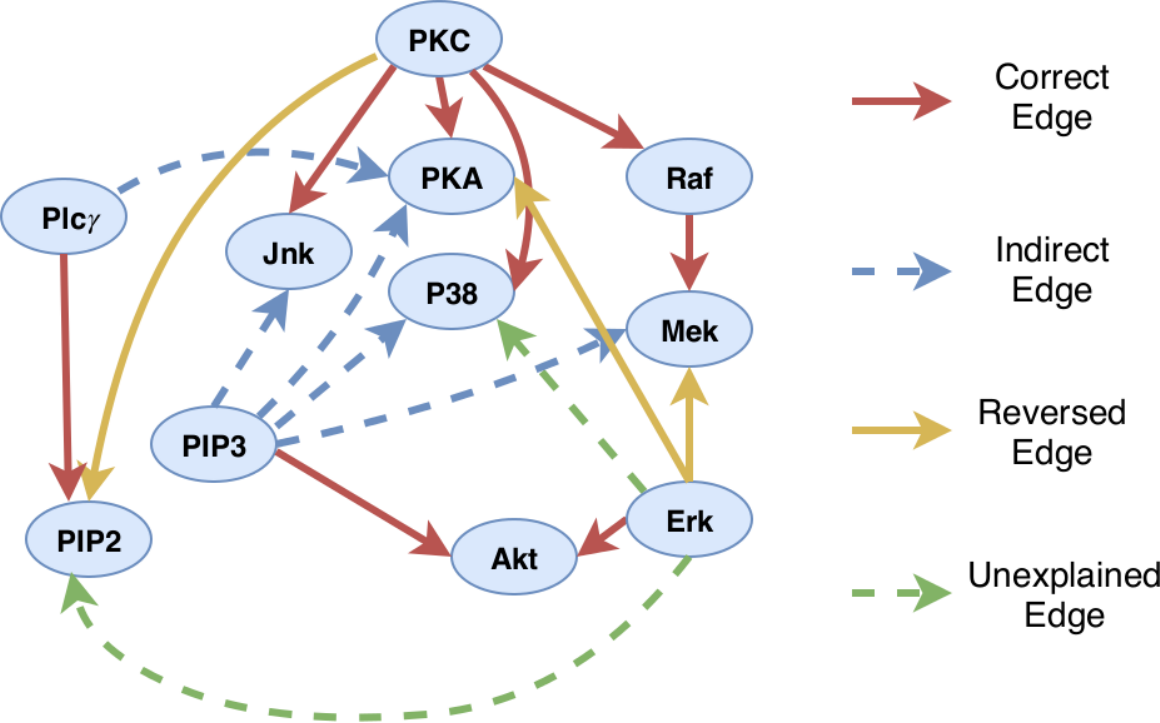}}}
\caption{Estimate protein signaling network.}
\label{fig:protein}
\end{figure}

For another application, we develop a new causal inference task over relations defined in a knowledge base (KB) schema. The task aims at learning a BN, the nodes of which are relations and the edges indicate whether one relation suggests another. For example, the relation person/Nationality may imply person/Language, because the spoken language of a person naturally associates with his/her nationality. This task has a practical value, because most existing KBs are constructed by hand. The success of this task helps suggest meaningful relations for new entities and reduce human efforts. We construct a data set from FB15K-237~\citep{toutanova2015representing} and list in Table~\ref{tab:kb} a few extracted causal relations. Because of space limitation, we defer the details and more results in the supplementary material. One sees that these results are quite intuitive. We plan a comprehensive study with field experts to systematically evaluate the extraction results.

\begin{table}[!ht]
\centering
\caption{Examples of extracted edges with high confidence.
}
\label{tab:kb}
\vskip 0.1in
\small
\begin{tabular}{rcl}
\toprule[0.8pt]
film/ProducedBy &$\Rightarrow$ &film/Country \\
film/ProductionCompanies & $\Rightarrow$ &film/Country \\
\midrule
person/Nationality& $\Rightarrow$& person/Languages \\
person/PlaceOfBirth& $\Rightarrow$& person/Languages \\
\midrule
person/PlaceOfBirth& $\Rightarrow$& person/Nationality \\
person/PlaceLivedLocation& $\Rightarrow$& person/Nationality\\
\toprule[0.8pt]
\end{tabular}
\vspace{-0.3cm}
\end{table}

\section{Conclusion}
DAG structure learning is a challenging problem that has long been pursued in the literature of graphical models. The difficulty, in a large part, is owing to the NP-hardness incurred in the combinatorial formulation. \citet{Zheng2018} propose an equivalent continuous constraint that opens the opportunity of using well developed continuous optimization techniques for solving the problem. In this context, we explore the power of neural networks as functional approximators and develop a deep generative model to capture the complex data distribution, aiming at better recovering the underlying DAG with a different design of the objective function. In particular, we employ the machinery of variational autoencoders and parameterize them with new graph neural network architectures. The proposed method handles not only data generated by parametric models beyond linear, but also variables in general forms, including scalar/vector values and continuous/discrete types. We have performed extensive experiments on synthetic, benchmark, and application data and demonstrated the practical competitiveness of the proposal.


\bibliography{bib,jie}

\begin{thebibliography}{53}
\providecommand{\natexlab}[1]{#1}
\providecommand{\url}[1]{\texttt{#1}}
\expandafter\ifx\csname urlstyle\endcsname\relax
  \providecommand{\doi}[1]{doi: #1}\else
  \providecommand{\doi}{doi: \begingroup \urlstyle{rm}\Url}\fi

\bibitem[Bertsekas(1999)]{Bertsekas1999}
Bertsekas, D.~P.
\newblock \emph{Nonlinear Programming}.
\newblock Athena Scientific, 2nd edition, 1999.

\bibitem[Bruna et~al.(2014)Bruna, Zaremba, Szlam, and LeCun]{Bruna2014}
Bruna, J., Zaremba, W., Szlam, A., and LeCun, Y.
\newblock Spectral networks and locally connected networks on graphs.
\newblock In \emph{ICLR}, 2014.

\bibitem[Chen et~al.(2016)Chen, Shen, Choi, and Darwiche]{chen2016learning}
Chen, E. Y.-J., Shen, Y., Choi, A., and Darwiche, A.
\newblock Learning {B}ayesian networks with ancestral constraints.
\newblock In \emph{NIPS}, 2016.

\bibitem[Chen et~al.(2018)Chen, Ma, and Xiao]{Chen2018}
Chen, J., Ma, T., and Xiao, C.
\newblock {FastGCN}: Fast learning with graph convolutional networks via
  importance sampling.
\newblock In \emph{ICLR}, 2018.

\bibitem[Chickering(2002)]{chickering02}
Chickering, D.~M.
\newblock Optimal structure identification with greedy search.
\newblock \emph{Journal of Machine Learning Research}, 2002.

\bibitem[Chickering et~al.(2004)Chickering, Heckerman, and
  Meek]{chickering2004large}
Chickering, D.~M., Heckerman, D., and Meek, C.
\newblock Large-sample learning of {B}ayesian networks is {NP}-hard.
\newblock \emph{Journal of Machine Learning Research}, 5:\penalty0 1287--1330,
  2004.

\bibitem[Chow \& Liu(1968)Chow and Liu]{chow1968approximating}
Chow, C. and Liu, C.
\newblock Approximating discrete probability distributions with dependence
  trees.
\newblock \emph{IEEE transactions on Information Theory}, 14\penalty0
  (3):\penalty0 462--467, 1968.

\bibitem[Cussens(2011)]{cussens11}
Cussens, J.
\newblock {B}ayesian network learning with cutting planes.
\newblock In \emph{UAI}, 2011.

\bibitem[Cussens et~al.(2016)Cussens, Haws, and
  Studen{\`y}]{cussens2016polyhedral}
Cussens, J., Haws, D., and Studen{\`y}, M.
\newblock Polyhedral aspects of score equivalence in {B}ayesian network
  structure learning.
\newblock \emph{Mathematical Programming}, pp.\  1--40, 2016.

\bibitem[Defferrard et~al.(2016)Defferrard, Bresson, and
  Vandergheynst]{Defferrard2016}
Defferrard, M., Bresson, X., and Vandergheynst, P.
\newblock Convolutional neural networks on graphs with fast localized spectral
  filtering.
\newblock In \emph{NIPS}, 2016.

\bibitem[Eaton \& Murphy(2012)Eaton and Murphy]{eaton2012bayesian}
Eaton, D. and Murphy, K.
\newblock {B}ayesian structure learning using dynamic programming and {MCMC}.
\newblock \emph{arXiv preprint arXiv:1206.5247}, 2012.

\bibitem[Friedman \& Koller(2003)Friedman and Koller]{friedman2003being}
Friedman, N. and Koller, D.
\newblock Being {B}ayesian about network structure. a {B}ayesian approach to
  structure discovery in {B}ayesian networks.
\newblock \emph{Machine learning}, 50\penalty0 (1-2):\penalty0 95--125, 2003.

\bibitem[Friedman et~al.(1999)Friedman, Goldszmidt, and
  Wyner]{friedman1999data}
Friedman, N., Goldszmidt, M., and Wyner, A.
\newblock Data analysis with {B}ayesian networks: A bootstrap approach.
\newblock In \emph{UAI}, 1999.

\bibitem[Gilmer et~al.(2017)Gilmer, Schoenholz, Riley, Vinyals, and
  Dahl]{Gilmer2017}
Gilmer, J., Schoenholz, S.~S., Riley, P.~F., Vinyals, O., and Dahl, G.~E.
\newblock Neural message passing for quantum chemistry.
\newblock In \emph{ICML}, 2017.

\bibitem[Grzegorczyk \& Husmeier(2008)Grzegorczyk and
  Husmeier]{grzegorczyk2008improving}
Grzegorczyk, M. and Husmeier, D.
\newblock Improving the structure {MCMC} sampler for {B}ayesian networks by
  introducing a new edge reversal move.
\newblock \emph{Machine Learning}, 71\penalty0 (2-3):\penalty0 265, 2008.

\bibitem[Gámez et~al.(2011)Gámez, Mateo, and Puerta]{HCimap}
Gámez, J., Mateo, J., and Puerta, J.
\newblock Learning {B}ayesian networks by hill climbing: efficient methods
  based on progressive restriction of the neighborhood.
\newblock \emph{Data Mining and Knowledge Discovery}, 22\penalty0
  (1-2):\penalty0 106--148, 2011.

\bibitem[Hamilton et~al.(2017)Hamilton, Ying, and Leskovec]{Hamilton2017}
Hamilton, W.~L., Ying, R., and Leskovec, J.
\newblock Inductive representation learning on large graphs.
\newblock In \emph{NIPS}, 2017.

\bibitem[He et~al.(2016)He, Tian, and Wu]{he2016structure}
He, R., Tian, J., and Wu, H.
\newblock Structure learning in {B}ayesian networks of a moderate size by
  efficient sampling.
\newblock \emph{Journal of Machine Learning Research}, 17\penalty0
  (1):\penalty0 3483--3536, 2016.

\bibitem[Heckerman et~al.(1995)Heckerman, Geiger, and
  Chickering]{heckerman1995learning}
Heckerman, D., Geiger, D., and Chickering, D.~M.
\newblock Learning {B}ayesian networks: The combination of knowledge and
  statistical data.
\newblock \emph{Machine learning}, 20\penalty0 (3):\penalty0 197--243, 1995.

\bibitem[Jaakkola et~al.(2010)Jaakkola, Sontag, Globerson, and
  Meila]{jaakkola2010}
Jaakkola, T., Sontag, D., Globerson, A., and Meila, M.
\newblock Learning {B}ayesian network structure using {LP} relaxations.
\newblock 2010.

\bibitem[Kalainathan et~al.(2018)Kalainathan, Goudet, Guyon, Lopez-Paz, and
  Sebag]{Kalainathan2018}
Kalainathan, D., Goudet, O., Guyon, I., Lopez-Paz, D., and Sebag, M.
\newblock {SAM}: Structural agnostic model, causal discovery and penalized
  adversarial learning.
\newblock \emph{arXiv preprint arXiv:1803.04929}, 2018.

\bibitem[Kingma \& Ba(2015)Kingma and Ba]{Kingma2015}
Kingma, D.~P. and Ba, J.
\newblock Adam: A method for stochastic optimization.
\newblock In \emph{ICLR}, 2015.

\bibitem[Kingma \& Welling(2014)Kingma and Welling]{Kingma2014}
Kingma, D.~P. and Welling, M.
\newblock Auto-encoding variational {B}ayes.
\newblock In \emph{ICLR}, 2014.

\bibitem[Kipf \& Welling(2017)Kipf and Welling]{Kipf2017}
Kipf, T.~N. and Welling, M.
\newblock Semi-supervised classification with graph convolutional networks.
\newblock In \emph{ICLR}, 2017.

\bibitem[Koivisto \& Sood(2004)Koivisto and Sood]{koivisto2004exact}
Koivisto, M. and Sood, K.
\newblock Exact {B}ayesian structure discovery in {B}ayesian networks.
\newblock \emph{Journal of Machine Learning Research}, 5:\penalty0 549--573,
  2004.

\bibitem[Li et~al.(2016)Li, Tarlow, Brockschmidt, and Zemel]{Li2016}
Li, Y., Tarlow, D., Brockschmidt, M., and Zemel, R.
\newblock Gated graph sequence neural networks.
\newblock In \emph{ICLR}, 2016.

\bibitem[Madigan et~al.(1995)Madigan, York, and Allard]{madigan1995bayesian}
Madigan, D., York, J., and Allard, D.
\newblock {B}ayesian graphical models for discrete data.
\newblock \emph{International Statistical Review/Revue Internationale de
  Statistique}, pp.\  215--232, 1995.

\bibitem[Mohammadi et~al.(2015)Mohammadi, Wit, et~al.]{mohammadi2015bayesian}
Mohammadi, A., Wit, E.~C., et~al.
\newblock {B}ayesian structure learning in sparse {G}aussian graphical models.
\newblock \emph{Bayesian Analysis}, 10\penalty0 (1):\penalty0 109--138, 2015.

\bibitem[Mohan et~al.(2012)Mohan, Chung, Han, Witten, Lee, and
  Fazel]{mohan2012structured}
Mohan, K., Chung, M., Han, S., Witten, D., Lee, S.-I., and Fazel, M.
\newblock Structured learning of {G}aussian graphical models.
\newblock In \emph{NIPS}, 2012.

\bibitem[Mooij et~al.(2016)Mooij, Magliacane, and Claassen]{mooij2016joint}
Mooij, J.~M., Magliacane, S., and Claassen, T.
\newblock Joint causal inference from multiple contexts.
\newblock \emph{arXiv preprint arXiv:1611.10351}, 2016.

\bibitem[Nie et~al.(2014)Nie, Mau{\'a}, De~Campos, and Ji]{nie2014advances}
Nie, S., Mau{\'a}, D.~D., De~Campos, C.~P., and Ji, Q.
\newblock Advances in learning {B}ayesian networks of bounded treewidth.
\newblock In \emph{NIPS}, 2014.

\bibitem[Niinimaki et~al.(2012)Niinimaki, Parviainen, and
  Koivisto]{niinimaki2012partial}
Niinimaki, T., Parviainen, P., and Koivisto, M.
\newblock Partial order {MCMC} for structure discovery in {B}ayesian networks.
\newblock \emph{arXiv preprint arXiv:1202.3753}, 2012.

\bibitem[Niinim{\"a}ki \& Koivisto(2013)Niinim{\"a}ki and
  Koivisto]{niinimaki2013annealed}
Niinim{\"a}ki, T.~M. and Koivisto, M.
\newblock Annealed importance sampling for structure learning in {B}ayesian
  networks.
\newblock In \emph{IJCAI}, 2013.

\bibitem[Ott et~al.(2004)Ott, Imoto, and Miyano]{ott2004finding}
Ott, S., Imoto, S., and Miyano, S.
\newblock Finding optimal models for small gene networks.
\newblock In \emph{Pacific symposium on biocomputing}, 2004.

\bibitem[Paszke et~al.(2017)Paszke, Gross, Chintala, Chanan, Yang, DeVito, Lin,
  Desmaison, Antiga, and Lerer]{paszke2017automatic}
Paszke, A., Gross, S., Chintala, S., Chanan, G., Yang, E., DeVito, Z., Lin, Z.,
  Desmaison, A., Antiga, L., and Lerer, A.
\newblock Automatic differentiation in {PyTorch}.
\newblock 2017.

\bibitem[Pearl(1988)]{pearl88}
Pearl, J.
\newblock \emph{Probabilistic reasoning in intelligent systems: networks of
  plausible inference}.
\newblock Morgan Kaufmann Publishers, Inc., 2 edition, 1988.

\bibitem[Pearl(2003)]{pearl2003causality}
Pearl, J.
\newblock Causality: models, reasoning, and inference.
\newblock \emph{Econometric Theory}, 19\penalty0 (46):\penalty0 675--685, 2003.

\bibitem[Ramsey et~al.(2017)Ramsey, Glymour, Sanchez-Romero, and
  Glymour]{ramsey2017million}
Ramsey, J., Glymour, M., Sanchez-Romero, R., and Glymour, C.
\newblock A million variables and more: the fast greedy equivalence search
  algorithm for learning high-dimensional graphical causal models, with an
  application to functional magnetic resonance images.
\newblock \emph{International Journal of Data Science and Analytics},
  3\penalty0 (2):\penalty0 121--129, 2017.

\bibitem[Sachs et~al.(2005)Sachs, Perez, Peer, Lauffenburger, and
  Nolan]{sachs2005causal}
Sachs, K., Perez, O., Peer, D., Lauffenburger, D.~A., and Nolan, G.~P.
\newblock Causal protein-signaling networks derived from multiparameter
  single-cell data.
\newblock \emph{Science}, 308\penalty0 (5721):\penalty0 523--529, 2005.

\bibitem[Scanagatta et~al.(2015)Scanagatta, de~Campos, Corani, and
  Zaffalon]{scanagatta2015learning}
Scanagatta, M., de~Campos, C.~P., Corani, G., and Zaffalon, M.
\newblock Learning {B}ayesian networks with thousands of variables.
\newblock In \emph{NIPS}, 2015.

\bibitem[Silander \& Myllymaki(2006)Silander and Myllymaki]{silander06}
Silander, T. and Myllymaki, P.
\newblock A simple approach for finding the globally optimal {B}ayesian network
  structure.
\newblock In \emph{UAI}, 2006.

\bibitem[Singh \& Moore(2005)Singh and Moore]{smDP}
Singh, A.~P. and Moore, A.~W.
\newblock Finding optimal {B}ayesian networks by dynamic programming.
\newblock Technical report, Carnegie Mellon University, 2005.

\bibitem[Spirtes et~al.(1995)Spirtes, Meek, and Richardson]{spirtes1995causal}
Spirtes, P., Meek, C., and Richardson, T.
\newblock Causal inference in the presence of latent variables and selection
  bias.
\newblock In \emph{UAI}, 1995.

\bibitem[Spirtes et~al.(1999)Spirtes, Glymour, and
  Scheines]{glymour1999computation}
Spirtes, P., Glymour, C.~N., and Scheines, R.
\newblock \emph{Computation, Causation, and Discovery}.
\newblock AAAI Press, 1999.

\bibitem[Spirtes et~al.(2000{\natexlab{a}})Spirtes, Glymour, Scheines,
  Kauffman, Aimale, and Wimberly]{PCalgorithm}
Spirtes, P., Glymour, C., Scheines, R., Kauffman, S., Aimale, V., and Wimberly,
  F.
\newblock Constructing {B}ayesian network models of gene expression networks
  from microarray data, 2000{\natexlab{a}}.

\bibitem[Spirtes et~al.(2000{\natexlab{b}})Spirtes, Glymour, Scheines,
  Heckerman, Meek, Cooper, and Richardson]{spirtes2000causation}
Spirtes, P., Glymour, C.~N., Scheines, R., Heckerman, D., Meek, C., Cooper, G.,
  and Richardson, T.
\newblock \emph{Causation, prediction, and search}.
\newblock MIT press, 2000{\natexlab{b}}.

\bibitem[Toutanova et~al.(2015)Toutanova, Chen, Pantel, Poon, Choudhury, and
  Gamon]{toutanova2015representing}
Toutanova, K., Chen, D., Pantel, P., Poon, H., Choudhury, P., and Gamon, M.
\newblock Representing text for joint embedding of text and knowledge bases.
\newblock In \emph{EMNLP}, 2015.

\bibitem[Tsamardinos et~al.(2003)Tsamardinos, Aliferis, and
  Statnikov]{Tsamardinos03}
Tsamardinos, I., Aliferis, C.~F., and Statnikov, A.
\newblock Time and sample efficient discovery of markov blankets and direct
  causal relations.
\newblock In \emph{SIGKDD}, 2003.

\bibitem[Tsamardinos et~al.(2006)Tsamardinos, Brown, and Aliferis]{MMPCcor}
Tsamardinos, I., Brown, L., and Aliferis, C.
\newblock The max-min hill-climbing {B}ayesian network structure learning
  algorithm.
\newblock \emph{Machine Learning}, 65\penalty0 (1):\penalty0 31--78, 2006.

\bibitem[Veli\u{c}kovi\'{c} et~al.(2018)Veli\u{c}kovi\'{c}, Cucurull, Casanova,
  Romero, Li\`{o}, and Bengio]{Velickovic2018}
Veli\u{c}kovi\'{c}, P., Cucurull, G., Casanova, A., Romero, A., Li\`{o}, P.,
  and Bengio, Y.
\newblock Graph attention networks.
\newblock In \emph{ICLR}, 2018.

\bibitem[Yuan \& Malone(2013)Yuan and Malone]{Yuan13learning}
Yuan, C. and Malone, B.
\newblock Learning optimal {B}ayesian networks: A shortest path perspective.
\newblock 48:\penalty0 23--65, 2013.

\bibitem[Zhang(2008)]{zhang2008completeness}
Zhang, J.
\newblock On the completeness of orientation rules for causal discovery in the
  presence of latent confounders and selection bias.
\newblock \emph{Artificial Intelligence}, 172\penalty0 (16-17):\penalty0
  1873--1896, 2008.

\bibitem[Zheng et~al.(2018)Zheng, Aragam, Ravikumar, and Xing]{Zheng2018}
Zheng, X., Aragam, B., Ravikumar, P., and Xing, E.~P.
\newblock {DAG}s with {NO} {TEARS}: Continuous optimization for structure
  learning.
\newblock In \emph{NIPS}, 2018.

\end{thebibliography}
\bibliographystyle{icml2019}

\newpage
\appendix

\section{Proofs}

\begin{proof}[Proof of Theorem~\ref{thm:tr}]
  Let $B=A\circ A$. Clearly, $B$ is nonnegative. The binomial expansion reads
  \[
  (I+\alpha B)^m = I + \sum_{k=1}^m \binom{m}{k}\alpha^kB^k.
  \]
  It is known that there is a cycle of length $k$ if and only if $\tr(B^k)>0$ when $B\ge0$. Because if there is a cycle then there is a cycle of length at most $m$, we conclude that there is no cycle if and only if $\tr[(I+\alpha B)^m]=\tr(I)=m$.
\end{proof}

\begin{proof}[Proof of Theorem~\ref{thm:exp}]
  Write
  \[
  (1+\alpha|\lambda|)^m=\left(1+\frac{c|\lambda|}{m}\right)^m.
  \]
  For given $c$ and $|\lambda|$, the right-hand side of the equality is a function of $m$. This function monotonically increases for positive $m$ and has a limit $e^{c|\lambda|}$. Hence, for any finite $m>0$, $(1+\alpha|\lambda|)^m\le e^{c|\lambda|}$.
\end{proof}

\section{Structure Learning over KB Relations}

We construct the data set from triples in
FB15K-237 \cite{toutanova2015representing}, which is a subset of FreeBase with approximately 15k entities and 237 relations. Each sample corresponds to an entity and each variable corresponds to a relation in this knowledge base. Each sample has on average 7.36 relations (i.e. 7.36 non-zero entries in each row).

Table \ref{tab:kb_results} gives additional examples learned by our model with highest confidence scores. 
For each target relation on the right-hand side, we show the highest ranked relations within the same domain (i.e. the contents in the field before ``/'' such as ``film'' and ``tvProgram'').
On the left-hand side, we omit the relations that are common to the associated entity types, e.g. ``profession'' and ``gender'' to persons and ``genre'' to films, because almost all entities with these types will contain such a relation.

\begin{table*}[t!]
\centering
\caption{(Continued from Table~\ref{tab:kb}) Examples of extracted edges with high confidence. The dot $\cdot$ appearing in $R_1.R_2$ means that the sample entity is connected to a virtual node (i.e. compound value types in FreeBase) via relation $R_1$, followed by a relation $R_2$ to a real entity.}
\label{tab:kb_results}
\vskip 0.1in
\begin{tabular}{rcl}
\toprule[0.8pt]
film/ProducedBy &$\Rightarrow$ &film/Country \\
film/ProductionCompanies & $\Rightarrow$ &film/Country \\
\midrule
tvProgram/CountryOfOriginal&$\Rightarrow$& tvProgram/Language \\
tvProgram/RegularCast.regularTv/AppearanceActor&$\Rightarrow$&tvProgram/Language\\
\midrule
person/Nationality& $\Rightarrow$& person/Languages \\
person/PlaceOfBirth& $\Rightarrow$& person/Languages \\
\midrule
person/PlaceOfBirth& $\Rightarrow$& person/Nationality \\
person/PlaceLivedLocation& $\Rightarrow$& person/Nationality\\
\midrule
organization/Headquarters.mailingAddress/Citytown& $\Rightarrow$&organization/PlaceFounded\\
organization/Headquarters.mailingAddress/StateProvince& $\Rightarrow$&organization/PlaceFounded\\
\toprule[0.8pt]
\end{tabular}
\end{table*}

\end{document}